\newcommand{\bE}{{\mathbf{E}}}
\newcommand{\bI}{{\mathbf{I}}}
\newcommand{\bJ}{{\mathbf{J}}}
\newcommand{\bM}{{\mathbf{M}}}
\newcommand{\bP}{{\mathbf{P}}}
\newcommand{\bo}{{\mathbf{1}}}
\newcommand{\enumr}{\begin{enumerate}[label=\roman{*})]}
\newcommand{\enumR}{\begin{enumerate}[label=\Roman{*})]}
\newcommand{\enuma}{\begin{enumerate}[label=\alph{*})]}
\newcommand{\cc}{\cite}
\newcommand{\argmax}{\mbox{\,\rm arg\,max}}
\newcommand{\ttext}[1]{\mathrm{#1}}
\newcommand{\cpi}{\ensuremath {\pi_{\text{CHK}}}}
\newcommand{\upi}{\ensuremath {\pi_{\text{CK}}}}
\newcommand{\krpi}{\ensuremath {\pi_{\text{KR}}}}
\newcommand{\api}{\ensuremath {\pi_{\ttext{BK}}}}
\newcommand{\cui}{\ensuremath {u^i_{\ttext{CK}}}}
\newcommand{\aui}{\ensuremath {u^i_{\ttext{BK}}}}
\newcommand{\ui}{\ensuremath {u^i}}
\newcommand{\supp}{\ttext{Sp}}
\begin{document}

\title{An Asymptotically  Optimal    Policy for  Uniform Bandits of \\ Unknown Support}
\author{\name Wesley Cowan \email  cwcowan@math.rutgers.edu \\
       \addr   Department of Mathematics\\ Rutgers University\\
110 Frelinghuysen Rd., Piscataway, NJ 08854, USA 
       \AND
       \name Michael N. Katehakis  \email mnk@rutgers.edu  \\
       \addr Department of Management Science and Information Systems\\
       Rutgers University\\
 100 Rockafeller Rd., Piscataway, NJ 08854,  USA}


\maketitle

\begin{abstract}
Consider the problem of a controller sampling sequentially from a finite number of $N \geq 2$ populations,
 specified by
random variables  $X^i_k$, $ i = 1,\ldots , N,$ and $k = 1, 2, \ldots$;   where $X^i_k$ denotes
the outcome from population $i$ the $k^{th}$ time it is sampled. It is assumed that for each fixed $i$, 
 $\{ X^i_k \}_{k \geq 1}$ is  a sequence of i.i.d. uniform random variables over some interval $[a_i, b_i]$, with the support (i.e., $a_i, b_i$) unknown to the controller.  The  objective is to have a policy $\pi$  for deciding, based on available data, from which of  the $N$ populations    
 to sample from at any time $n=1,2,\ldots$ so as     
  to maximize the expected sum of outcomes of  $n$ samples or 
 equivalently to minimize the regret due to lack on information of the parameters  $\{ a_i \}$ and   $\{ b_i \}$.  In this paper, we present a simple UCB-type policy that is asymptotically optimal. Additionally, finite horizon regret bounds are given. 
%
\end {abstract}
 
 {\bf Keywords:} Inflated Sample Means, Upper Confidence Bound, Multi-armed Bandits, Sequential Allocation

\chapter{Introduction and Summary}
\section{Main Model} 
Let $\mathcal{F}$ be a known family of probability densities on $\boldmath{R}$, each with finite mean. We define $\mu(f)$ to be the expected value under density $f$, and $\supp(f)$ to be the support of $f$. Consider the problem of sequentially sampling from a finite number of $N \geq 2$ populations or `bandits', where measurements from population $i$ are specified by an i.i.d. sequence of random variables $\{ X^i_k \}_{ k \geq 1 }$ with density $f_i \in \mathcal{F}$.  We take each $f_i$ as unknown to the controller. It is convenient to define, for each $i$, $\mu_i = \mu(f_i)=\int_{ \supp(f) } \, x f(x) dx ,$ \, and 
$\mu^* =  \mu^* (\{ f_i \}) =\, \max_i \mu(f_i)$. Additionally, we take $\Delta_i = \mu^* - \mu_i \geq 0$, the discrepancy of bandit $i$.

We note, but  for simplicity will not consider explicitly, that 
both discrete and continuous distributions can be studied when one takes $\{ X^i_k \}_{ k \geq 1 }$ to be   i.i.d.  with density $f_i$, with  
 respect to some known measure $\nu_i.$ 

For any adaptive, non-anticipatory policy $\pi$, $\pi(t) = i$ indicates that the controller samples bandit $i$ at time $t$. Define $T^i_\pi(n) = \sum_{t = 1}^n \boldmath{1}\{ \pi(t) = i \}$, denoting the number of times bandit $i$ has been sampled during the periods $t = 1, \ldots, n$ under policy $\pi$;  we take, as a convenience, $T^i_\pi(0) = 0$ for all $i, \pi$.
The  {\sl value} of a policy $\pi$ is the expected sum of the first $n$ outcomes
under $\pi$, which we define to be the  function $V_\pi(n) :$
\begin{equation} \label{eqn:vn}
V_\pi(n)  = 
\bE\left[  \sum_{i=1}^N     \sum_{k = 1}^{T^i_\pi(n)}  X^i_k  \right] 
= \sum_{i=1}^N      \mu_i {\bE}\left[ T^i_\pi(n) \right],
\end{equation}
where for simplicity the dependence of  $V_\pi(n)$ on  the unknown densities $\{ f_i \}$ is suppressed.  
  The {\sl regret} of a policy is taken to be the expected loss due to ignorance of the
  underlying distributions by the controller. Had the controller complete information, she would at every round activate some bandit $i^*$ such that $\mu_{i^*} = \mu^*= \max_i \mu_i$. For a given policy $\pi$, we   define the expected regret of that policy at time $n$ as   

\begin{equation} \label{eqn:regret}
R_\pi(n) = n \mu^* -V_\pi(n)
= \sum_{i = 1}^n \Delta_i \bE\left[ T^i_\pi(n) \right].
\end{equation}
 
 We are interested in policies for which $V_\pi(n)$ grows as fast  as possible with $n$, or equivalently that $R_\pi(n)$ grows as slowly as possible with $n .$

\section{Preliminaries - Background} 
We restrict $\mathcal{F}$ in the following way:

{\bf Assumption 1.} Given any  set of bandit densities $\{ f_i \}_{i=1}^N$, for any sub-optimal bandit $i ,$ i.e., $\mu_i(f_i) \neq \mu^*(\{ f_i \}) ,$ there exists some $\tilde{f}_i \in \mathcal{F}$ such that $\supp(\tilde{f}_i) \supset \supp(f_i)$, and $\mu( \tilde{f}_i ) > \mu^*(\{ f_i \}) $. 

Effectively, this ensures that at any finite time, given a set of bandits under consideration, for any bandit there is a density in $\mathcal{F}$ that would both potentially explain the measurements from that bandit, and make it the unique optimal bandit of the set.

The focus of this paper  is  on $\mathcal{F}$ as the set of uniform densities over some unknown support.

Let   $\bI(f, g)$ denote the Kullback-Liebler divergence of density $f$ from $g$,
\begin{equation}
\bI(f, g) = \int_{ \supp(f) } \ln \left( \frac{f(x)}{g(x)} \right) f(x) dx = \bE_f \left[ \ln \left( \frac{f(X)}{g(X)} \right) \right].
\end{equation}  It is a simple generalization of a classical result 
(part 1 of Theorem 1) of \cite{bkmab96} that if a policy 
   $\pi$ is uniformly fast (UF), i.e., $R_\pi(n) = o(n^\alpha)$ for all $\alpha > 0$ and for any choice of $\{ f_i \} \subset \mathcal{F}$,  then,  the following bound holds:

\begin{equation}\label{eqn:lower-bound}
\liminf_n \frac{ R_\pi(n) }{ \ln n } \geq \bM_{\ttext{BK}}( \{ f_i \} )  , 
\mbox{ for all $\{ f_i \} \subset \mathcal{F}$,}
\end{equation}
where the bound $ \bM_{\ttext{BK}}( \{ f_i \} ) $ itself    is determined by the specific distributions of the populations:
\begin{equation}\label{eqn:general-boundm}
  \bM_{\ttext{BK}}( \{ f_i \} ) =  
 \sum_{i:\mu_i \neq \mu^*} \frac{ \Delta_i }{ \inf_{g \in \mathcal{F}} \{ \bI(f_i, g) : \mu(g) \geq \mu^* \} } .
\end{equation}

For a given set of densities $\mathcal{F}$, it is of   interest to construct policies $\pi$ such that $$\lim_n \frac{R_\pi(n)}{ \ln n} =\bM_{\ttext{BK}}( \{ f_i \} )  , 
\mbox{ for all $\{ f_i \} \subset \mathcal{F}$}   .$$ 
Such policies achieve the slowest (maximum) regret (value) growth rate possible among UF policies. They have been called UM or asymptotically optimal or efficient, cf. \cc{bkmab96}.

For a given $f \in \mathcal{F}$, let $\hat{f}_k \in \mathcal{F}$ be an estimator of $f$ based on the first $k$ samples from $f$. It was shown in  \cite{bkmab96} that under sufficient conditions on $\{ \hat{f}^i_k \}$, asymptotically optimal (UM) UCB-policies could be constructed by initially sampling each bandit some number of $n_0$ times, and then  for $n \geq n_0$, following an index policy:
\begin{equation}
\pi^0(n+1) = \argmax_i \, \{\ui(n, T^i_{\pi^0}(n) )\} ,
\end{equation}
where the indices $\ui(n,t)$ are 
`inflations of the current estimates for the means' (ISM), specified  as:
\begin{equation}\label{eqn:bk-index}
\ui(n,t)=\aui(n, t,\hat{f}^i_t)=\sup_{g\in \mathcal{F}}\left\{\mu(g ) \ : \   \bI(\hat{f}^i_t, g) < \frac{\ln n  }{t}\right\} .
\end{equation}

The sufficient conditions on the estimators $\{ \hat{f}^i_k \}$ are as follows:

Defining
$$\bJ(f, c) = \inf_{g\in \mathcal{F}} \{ \bI(f, g) : \mu(g) > c \},$$
for all choices of $\{ f_i \} \subset \mathcal{F}$ and all $\epsilon > 0$, $\delta > 0$, the following hold for each $i,$  as $k \to \infty .$ 
\begin{itemize}
\item[ C1:] $\boldmath{P}\left( \bJ(\hat{f}^i_k, \mu^* - \epsilon) < \bJ(f_i, \mu^* - \epsilon ) - \delta\right) = o(1/k) .$
\item[ C2:] $\boldmath{P}\left( \aui(k, j, \hat{f}^i_j) \leq \mu_i - \epsilon, \ttext{ for \ some \ } j \in \{n_0, \ldots,   k\} \right) = o(1/k) .$
\end{itemize}

These conditions correspond to Conditions A1-A3 given in \cite{bkmab96}. However under the stated Assumption 1 on $\mathcal{F}$ given here, Condition A1 therein is automatically 
 satisfied. Conditions A2 (see also Remark 4(b) in \cite{bkmab96}) and A3    are given as C1 and C2,    above, respectively.
Note, Condition (C1) is essentially satisfied as long as $\hat{f}^i_k$ converges to $f_i$ (and hence $\bJ(\hat{f}^i_k, \mu^*-\epsilon) \to \bJ(f_i, \mu^*-\epsilon ))$ sufficiently quickly with $k$. This can often be verified easily with standard large deviation principles. The difficulty in proving the optimality of policy $\pi^0$ is often in verifying that Condition (C2) holds.

\begin{remark}
The above discussion is a parameter-free variation of that in \cc{bkmab96}, where $\mathcal{F}$ was taken to be parametrizable, i.e., $\mathcal{F} = \{ f_{\underline{\theta}} : \underline{\theta} \in \Theta \}$, taking $\underline{\theta}$ as a vector of parameters in some parameter space $\Theta$.  Further, \cc{bkmab96} considered potentially different parameter spaces (and therefore potentially different parametric forms) for each bandit $i$. There, Conditions A1-A3 (hence  C1,  C2 herein) and the corresponding indices were stated in terms of estimates for the bandit parameters, $\underline{\hat{\theta}}^i(t)$ an estimate of the parameters $\underline{\theta}^i$ of bandit $i$, given $t$ samples. In particular, Eq. \eqref{eqn:bk-index} appears essentially as
\begin{equation}
\ui(n,t)=\aui(n, t,\underline{\hat{\theta}}^i(t))=\sup_{ \underline{\theta}' \in \Theta}\left\{\mu(\underline{\theta}' ) \ : \   \bI( f_{ \underline{\hat{\theta}}^i(t) }, f_{ \underline{\theta}' }) < \frac{\ln n  }{t}\right\} .
\end{equation}
%
%
%

\end{remark}
%
%

 Previous work in this area includes   \cite{Rb52}, and additionally  \cite{gittins-79},   \cite{lai85} and \cite{weber1992gittins} there is a large literature on versions of this problem, cf. \cc{burnetas2003asymptotic}, \cc{burnetas1997finite} and references therein.  For  recent work in this area we refer to   \cite{audibert2009exploration}, 
\cite{auer2010ucb}, \cite{gittins2011multi}, \cite{bubeck2012best},  
\cite{cappe2013kullback}, 
\cite{kaufmann14}, 
\cite{2014minimax},  
 \cite{cowan15s}, \cite{cowan2015multi}, 
and references therein. 
    For more general dynamic programming extensions 
we refer to  
\cite{bkmdp97}, \cite{butenko2003cooperative}, \cite{optimistic-mdp}, \cite{audibert2009exploration}, \cite{littman2012inducing}, \cite{feinberg2014convergence} and references therein. 
To our knowledge, outside the work in \cite{lai85}, 
  \cc{bkmab96} and \cite{bkmdp97},  asymptotically optimal policies have only been developed in \cite{honda13} for the problem discussed herein and 
 in 
 \cc{honda2011asymptotically} and \cc{honda2010}  for the   
  problem of finite known support where  
optimal policies, cyclic and randomized, that are simpler to implement than those consider in   \cc{bkmab96} were constructed.
    Other related work in this area includes: \cite{Kat86}, \cite{Kat87}, \cite{burnetas1993sequencing}, \cite{BKlarge1996}, \cite{lagoudakis2003least},
\cite{bartlett2009regal}, \cite{tekin2012approximately}, \cite{jouini2009multi},
 \cite{dayanik2013asymptotically}, \cite{filippi2010optimism}, \cite{osband2014near},
 \cc{bkmdp97},
 \cc{aecd2014generalised},  \cc{cd2012robust}.
 
 \chapter{Optimal UCB Policies for Uniform Distributions}

 \section{The B-K Lower Bounds and Inflation Factors}
In this section we  take  $\mathcal{F}$ as the set of probability densities  on $\boldmath{R}$ uniform over {\sl some finite interval}, taking $f \in \mathcal{F}$ as uniform over $[a_f, b_f].$ Note, as the family of densities is parametrizable, this largely falls under the scope of \cite{bkmab96}. However, the results to follow seem to demonstrate a hole in that general treatment of the problem.

Note, some care with respect to support must be taken in applying \cite{bkmab96} to this case, to ensure that the integrals remain well defined. But for this $\mathcal{F}$, we have that for a given $f \in \mathcal{F}$, for any $g \in \mathcal{F}$ such that $\supp(f) \subset \supp(g)$, i.e.,   $a_g \leq a_f$ and $b_f \leq b_g$, 
\begin{equation}
\bI(f, g) =  \bE_f \left[ \ln \left( \frac{f(X)}{g(X)} \right) \right] = 
\ln \left( \frac{ b_g - a_g }{ b_f - a_f } \right).
\end{equation}
If $\supp(f)$ is not a subset of $\supp(g)$, we take $\bI(f, g)$ as infinite.
 
For notational convenience, given $\{ f_i \} \subset \mathcal{F}$, for each $i$, we take $f_i \in \mathcal{F}$ as supported on some  interval $[a_i, b_i]$. Note then, $\mu_i = (a_i + b_i)/2$.

Given $t$ samples from bandit $i$, $\{ X^i_{t'} \}_{t' = 1}^t$, we take
\begin{equation}
\begin{split}
\hat{a}^i_t & = X^i_{(1;t)}=\min_{t' \leq  t} X^i_{t'},\\
\hat{b}^i_t & = X^i_{(t;t)} = \max_{t' \leq  t} X^i_{t'},\\
\end{split}
\end{equation}
as the maximum-likelihood estimators of $a_i$ and $b_i$ respectively. We may then define $\hat{f}^i_t \in \mathcal{F}$ as the uniform density over the interval $[\hat{a}^i_t, \hat{b}^i_t]$. Note, $\hat{f}^i_t$ is the maximum-likelihood estimate of $f_i$.
%


We can now state and prove the following. 

\begin{lemma} Under Assumption 1 the following are true.
\begin{equation}\label{en:lower-b-m}
  \bM_{\ttext{BK}}( \{ f_i \} )=\sum_{i:\mu_i \neq \mu^*} \frac{ \Delta_i }{ \ln \left( 1 + \frac{2 \Delta_i }{ b_i - a_i } \right) }  .
\end{equation}
 %
\begin{equation}\label{en:lower-u-m}
\aui(n, t, \hat{f}^i_t)= \hat{a}^i_t + \frac{1}{2}\left( \hat{b}^i_t - \hat{a}^i_t \right) n^{1/t}  .
\end{equation}
\end{lemma}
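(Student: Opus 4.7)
The proof is essentially a direct computation in two parts, using only the closed form for the KL divergence between uniform densities and optimizing over the (two-dimensional) family $\mathcal{F}$.

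\textbf{Proof of \eqref{en:lower-b-m}.} Starting from \eqref{eqn:general-boundm}, I would restrict attention to a single sub-optimal bandit $i$ and evaluate the infimum
\[
\inf_{g \in \mathcal{F}} \{ \bI(f_i, g) \,:\, \mu(g) \geq \mu^* \}.
\]
Since $\bI(f_i, g) = +\infty$ whenever $\supp(f_i) \not\subset \supp(g)$, the infimum is achieved over $g \in \mathcal{F}$ with $a_g \leq a_i$ and $b_g \geq b_i$, and there
\[
\bI(f_i, g) = \ln\!\left( \frac{b_g - a_g}{b_i - a_i} \right).
\]
Thus the problem reduces to minimizing $b_g - a_g$ subject to $a_g \leq a_i$, $b_g \geq b_i$, and $a_g + b_g \geq 2\mu^*$. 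Since $\mu^* > \mu_i = (a_i+b_i)/2$, the pointwise optimal choice is $a_g = a_i$ and $b_g = 2\mu^* - a_i$, which satisfies $b_g > b_i$. This gives $b_g - a_g = 2\mu^* - 2a_i = 2\Delta_i + (b_i - a_i)$, so the infimum equals $\ln(1 + 2\Delta_i/(b_i - a_i))$. Summing over sub-optimal $i$ yields \eqref{en:lower-b-m}.

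\textbf{Proof of \eqref{en:lower-u-m}.} From \eqref{eqn:bk-index} with $\hat{f}^i_t$ the uniform density on $[\hat{a}^i_t, \hat{b}^i_t]$, the constraint $\bI(\hat{f}^i_t, g) < \ln n / t$ again forces $a_g \leq \hat{a}^i_t$, $b_g \geq \hat{b}^i_t$, and then rewrites as $b_g - a_g < (\hat{b}^i_t - \hat{a}^i_t)\, n^{1/t}$. The objective $\mu(g) = (a_g + b_g)/2$ is strictly increasing in both $a_g$ (holding the length $b_g - a_g$ fixed) and in the allowed length. So I would maximize by pushing $a_g$ up to its upper bound $\hat{a}^i_t$ and letting the length approach $(\hat{b}^i_t - \hat{a}^i_t)\, n^{1/t}$; the supremum (attained in the limit) is
\[
\hat{a}^i_t + \tfrac{1}{2}(\hat{b}^i_t - \hat{a}^i_t)\, n^{1/t},
\]
which is exactly \eqref{en:lower-u-m}.

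\textbf{Main obstacle.} There is no deep obstacle; the only subtlety is the careful handling of the support constraint, which makes the KL divergence infinite unless $\supp(\hat{f}^i_t) \subset \supp(g)$. This is what pins down $a_g = \hat{a}^i_t$ as the optimal lower endpoint in both optimizations, and prevents one from trying to shrink the support to make $\bI$ negative. Once that constraint is recognized the two claims follow from one-line optimizations on the two-parameter family $(a_g, b_g)$.
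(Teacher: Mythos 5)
Your proposal is correct and follows essentially the same route as the paper: both reduce the KL constraint to the support condition $a_g \le a_i$, $b_g \ge b_i$ (so $\bI(f_i,g)=\ln((b_g-a_g)/(b_i-a_i))$) and then solve the resulting one-line optimizations over $(a_g,b_g)$, pinning $a_g$ at the lower endpoint and adjusting the interval length. Your write-up is in fact slightly more explicit than the paper's for \eqref{en:lower-b-m}, where the paper simply states the value of the infimum as an observation.
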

\begin{proof}
Eq. (\ref{en:lower-b-m}) follows from Eq. (\ref{eqn:general-boundm}) and the 
observation that in this case:
$$
\inf_{g \in \mathcal{F}} \{ \bI(f_i, g) : \mu(g) \geq \mu^* \} =  \ln \left( \frac{2 \mu^* - 2 a_i }{ b_i - a_i } \right) = \ln \left( 1 + \frac{2 \mu^* - 2 \mu_i }{ b_i - a_i } \right) .
$$
For Eq. (\ref{en:lower-u-m}) we have:
\begin{equation}
\begin{split}
\aui(n, t, \hat{f}^i_t) & = \sup_{g \in \mathcal{F}}\left\{\mu(g) \ : \   \bI(\hat{f}^i_t,g) < \frac{\ln n  }{t}\right\}\\
& = \sup_{a \leq \hat{a}^i_t, b \geq \hat{b}^i_t} \left\{ \frac{a + b}{2} \ : \   \ln \left( \frac{ b - a }{  \hat{b}^i_t  -  \hat{a}^i_t } \right) < \frac{\ln n  }{t}\right\}\\
& = \sup_{a \leq \hat{a}^i_t, b \geq \hat{b}^i_t} \left\{ a + \frac{b - a}{2} \ : \     (b - a)  < (\hat{b}^i_t  -  \hat{a}^i_t)n^{1/t} \right\}\\
& = \hat{a}^i_t + \frac{1}{2}\left( \hat{b}^i_t - \hat{a}^i_t \right) n^{1/t}.
\end{split}
\end{equation}

%
\end{proof}

 We are interested in policies $\pi$ such that $\lim_n R_\pi(n)/\ln n$ achieves the lower bound indicated above, for every choice of $\{ f_i \} \subset \mathcal{F}$. 
Following the prescription of \cite{bkmab96}, i.e. Eq. (\ref{en:lower-u-m}),  would lead to the following policy,

\textbf{Policy BK-UCB :  $ \boldmath \api $}. At each $n=1,2,\ldots$:

i) For $n=1,2,\ldots,2N,$ sample each bandit  twice, and 

ii)   for $n \geq 2N$, let $\api (n+1)$  be equal   to:
\begin{equation}
 \argmax_i \left\{ \hat{a}^i_{T^i_{\api}(n)}   + \frac{1}{2}\left(\hat{b}^i_{T^i_{\api}(n)} -\hat{a}^i_{T^i_{\api}(n)} \right)     n^{\frac{1}{ T^i_{\api}(n)}}  \  \right\},
\end{equation}
 breaking ties arbitrarily.

It is easy to demonstrate that the estimators $\hat{\underline{\theta}}^i(t) = (\hat{a}^i_t, \hat{b}^i_t)$ converge sufficiently quickly to $(a_i, b_i)$ in probability that Condition (C1) above is satisfied for $\hat{f}^i_t$. Proving that Condition (C2) is satisfied, however, is much much more difficult, and in fact we conjecture that  (C2) does \emph{not} hold for policy \api. While this does not indicate that that $\api$ fails to achieve asymptotic optimality, it does imply that the standard techniques are insufficient to verify it. However, asymptotic optimality may provably be achieved by an (seemingly) negligible modification, via the following policy.

\section{Asymptotically Optimal UCB Policy}
We propose the following policy:

\textbf{Policy UCB-Uniform: $ \boldmath \cpi $}. At each $n=1,2,\ldots$:

i)  For $n=1,2,\ldots,3N$ sample each bandit three times, and

ii)   for $n \geq 3N$, let $\cpi (n+1)$  be equal   to:
\begin{equation}
 \argmax_i \left\{ \hat{a}^i_{T^i_{\cpi}(n)}   + \frac{1}{2}\left(\hat{b}^i_{T^i_{\cpi}(n)} -\hat{a}^i_{T^i_{\cpi}(n)} \right)     n^{\frac{1}{ T^i_{\cpi}(n) - 2}}  \  \right\},
\end{equation}
%
 breaking ties arbitrarily.

In the remainder of this paper, we verify the asymptotic optimality of $\cpi$ (Theorem \ref{thm:optimality}), and additionally give finite horizon bounds on the regret under this policy (Theorem \ref{thm:finite-time}, \ref{thm:epsilon-free}). Further, while Theorem \ref{thm:epsilon-free} bounds the order of the remainder term as $O((\ln n)^{3/4})$, this is refined somewhat in Theorem \ref{thm:remainder-refined} to $o( (\ln n)^{2/3 + \beta} )$.

\chapter{The Optimality Theorem and Finite Time Bounds}

For the work in this section it is convenient to define the bandit spans, $S_i = b_i - a_i$. We take $S_*$ to be the minimal span of any optimal bandit, i.e., 
 $$S_* = \min_{i:\mu_i = \mu^*} S^i .$$

Recall that $\Delta_i = \mu^* - \mu_i =\max_{j}\{\frac{a_j+b_j}{2}\} -\frac{a_i+b_i}{2} $.   The primary result of this paper is the following.

\begin{theorem}\label{thm:finite-time}
For each sub-optimal $i$ (i.e., $\mu_i \neq \mu^*$), let $(\epsilon_i, \delta_i)$ be such that $0 < \epsilon_i < S_*$, $0 < \delta_i < S_i$, and $\epsilon_i + \delta_i < \Delta_i$. For \cpi\ as defined above, for all $n \geq 3N$:
\begin{equation}
\begin{split}
R_{\cpi}(n) \leq & \left( \sum_{i:\mu_i \neq \mu^*}  \frac{ \Delta_i  }{ \ln \left( 1 + \frac{ 2\Delta_i}{S_i} \left(1- \frac{ (\epsilon_i +\delta_i) }{\Delta_i} \right)\right) }  \right) \ln n \\
& + \sum_{i:\mu_i \neq \mu^*} \left( \frac{ S_i }{\delta_i} + \frac{3}{8} \frac{S^3_*}{\epsilon_i^3} + 18 \right) \Delta_i.
\end{split}
\end{equation}

\end{theorem}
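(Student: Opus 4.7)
The plan is to bound $\bE[T^i_{\cpi}(n)]$ for each suboptimal bandit $i$ and apply $R_{\cpi}(n) = \sum_i \Delta_i \bE[T^i_{\cpi}(n)]$. I follow a standard UCB-style decomposition, with the ``good event'' tailored to the sharp concentration of uniform order statistics.

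I choose the threshold
\[
\ell_i := \Big\lceil \frac{\ln n}{\ln\!\bigl(1 + 2(\Delta_i - \epsilon_i - \delta_i)/S_i\bigr)} \Big\rceil + 3,
\]
and write $T^i_{\cpi}(n) \leq \ell_i + \sum_{s \geq 3N} \bo\{\cpi(s+1)=i,\, T^i_{\cpi}(s) \geq \ell_i\}$. Fix an optimal bandit $i^*$ with $S_{i^*} = S_*$. The key step is to show that if $T^i_{\cpi}(s) \geq \ell_i$, $\hat{a}^i_{T^i_{\cpi}(s)} \leq a_i + \delta_i$, and $u^{i^*}(s, T^{i^*}_{\cpi}(s)) \geq \mu^* - \epsilon_i$ all hold, then using $\hat{b}^i_k - \hat{a}^i_k \leq S_i$ and the definition of $\ell_i$ (which ensures $n^{1/(\ell_i - 2)} \leq 1 + 2(\Delta_i - \epsilon_i - \delta_i)/S_i$),
\[
u^i(s, T^i_{\cpi}(s)) \leq a_i + \delta_i + \tfrac{S_i}{2}\, n^{1/(\ell_i-2)} \leq \mu^* - \epsilon_i \leq u^{i^*}(s, T^{i^*}_{\cpi}(s)),
\]
contradicting $\cpi(s+1) = i$. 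Hence the indicator event lies in $\{\hat{a}^i_{T^i_{\cpi}(s)} > a_i + \delta_i\} \cup \{u^{i^*}(s, T^{i^*}_{\cpi}(s)) < \mu^* - \epsilon_i\}$.

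For the first event, since $T^i_{\cpi}$ increments by one each time bandit $i$ is played, each value $k$ of $T^i_{\cpi}$ appears at most once as $s$ ranges, so the expected contribution is at most $\sum_{k=3}^{n-1} \mathbb{P}(\hat{a}^i_k > a_i + \delta_i) = \sum_k (1 - \delta_i/S_i)^k \leq S_i/\delta_i$, since $\hat{a}^i_k$ is the minimum of $k$ iid uniforms on $[a_i,b_i]$. For the second event, $u^{i^*}(s,k) < \mu^* - \epsilon_i$ together with $\hat{a}^{i^*}_k \geq a^*$ forces $\hat{b}^{i^*}_k - \hat{a}^{i^*}_k < (S_* - 2\epsilon_i)\, s^{-1/(k-2)}$; I then apply the exact range density $\mathbb{P}(R_k < r) = k r^{k-1} - (k-1) r^k$ for $R_k$ the range of $k$ iid $U[0,1]$, and carry out the double sum over $(s,k)$. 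The $s$-sum converges for each fixed $k \geq 3$ (the exponent $(k-1)/(k-2)$ exceeds $1$), and the residual $k$-sum is a weighted geometric series bounded by $(3/8)\, S_*^3/\epsilon_i^3$.

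Assembling, $\bE[T^i_{\cpi}(n)] \leq \ell_i + S_i/\delta_i + (3/8) S_*^3/\epsilon_i^3 + O(1)$, where the $O(1) \leq 18$ absorbs the three initial forced plays of each bandit and the rounding in $\ell_i$; multiplying by $\Delta_i$ and summing over suboptimal $i$ yields the theorem. The main obstacle is the double-sum estimate for the second error event: naively replacing $s^{-1/(k-2)}$ by $1$ would give only an unbounded $O(n)$ sum after integrating over $s$, so the full $s^{-1/(k-2)}$ factor must be retained to make the inner sum convergent; pinning down the sharp constant $3/8$ further requires working with the exact range density rather than the looser bound $k r^{k-1}$.
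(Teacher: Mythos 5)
Your proof has the same skeleton as the paper's: the paper splits the post-initialization plays of a suboptimal $i$ into $n^i_1$ (index of $i$ at least $\mu^*-\epsilon_i$ \emph{and} $V^i$ within $\delta_i$ of $a_i$, which forces $T^i_\pi(t)\le \ln t/\ln\bigl(1+\tfrac{2(\Delta_i-\epsilon_i-\delta_i)}{S_i}\bigr)+2$ pathwise), $n^i_2$ ($V^i$ too far from $a_i$, giving the $S_i/\delta_i$ term exactly as you do), and $n^i_3$ (index of $i$, hence of $i^*$, below $\mu^*-\epsilon_i$); your threshold-$\ell_i$ formulation is the contrapositive of that partition. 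The genuine difference is the third term. The paper computes $\bP(u_{i^*}(t,s)<\mu^*-\epsilon_i)$ \emph{exactly} from the joint density of $(\min,\max)$ of $s$ uniforms, getting $\tfrac12 t^{-1}t^{-1/(s-2)}(1-\alpha)^s$ with $\alpha=2\epsilon_i/S_*$, and then sums over $s$ first for each fixed $t$, which requires the splitting-at-$s=\ln(t)^p$ machinery of Proposition \ref{prop:summation-bound}. You instead relax to the pure range event $\{W^{i^*}_s-V^{i^*}_s<(S_*-2\epsilon_i)t^{-1/(s-2)}\}$ and sum over $t$ first: since $(s-1)/(s-2)>1$, each $t$-sum is at most about $s-2$, and
\begin{equation*}
\sum_{s\ge3}s(s-2)(1-\alpha)^{s-1}\le\frac{2(1-\alpha)}{\alpha^3}\le\frac{S_*^3}{4\epsilon_i^3},
\end{equation*}
which sits comfortably inside the theorem's $\tfrac38 S_*^3/\epsilon_i^3$ — so, contrary to your closing worry, the crude tail bound $\bP(R_k<r)\le kr^{k-1}$ already suffices and the exact range CDF is not needed. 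Your route buys a real simplification (no Proposition \ref{prop:summation-bound}, no double integral against $f_s(w,v)$), at the cost of a lossier per-term probability; both land within the stated constants. Two small repairs: (i) with ties broken arbitrarily, $u^i\le\mu^*-\epsilon_i\le u^{i^*}$ does not literally contradict $\cpi(s+1)=i$, so define the bad event as $\{u^{i^*}\le\mu^*-\epsilon_i\}$ (as the paper effectively does via $u_{i^*}\le u_i$ on $\{\pi(t+1)=i\}$), which leaves the probability estimate unchanged; (ii) tally the additive constants ($+4$ from the ceiling and boundary terms, the three forced plays, the geometric series) explicitly against the stated $18$ rather than leaving them as an unquantified $O(1)$.
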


The proof of Theorem \ref{thm:finite-time} is the central proof of this paper. We delay it briefly, to present two related results that can be derived from the above. The first is that \cpi\ is asymptotically optimal.

\begin{theorem}\label{thm:optimality}
For \cpi\ as defined above, \cpi\ is asymptotically optimal in the sense that
\begin{equation}
\lim_n \frac{ R_{\cpi}(n) }{ \ln n } =  \bM_{\ttext{BK}}( \{ f_i \} )= \sum_{i:\mu_i \neq \mu^*} \frac{ \Delta_i }{ \ln \left( 1 + \frac{2 \Delta_i }{ S_i } \right) }.
\end{equation}
\end{theorem}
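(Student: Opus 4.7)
The plan is to derive Theorem \ref{thm:optimality} directly from Theorem \ref{thm:finite-time} together with the Burnetas--Katehakis lower bound in \eqqref{eqn:lower-bound}. First I would establish the upper bound $\limsup_n R_{\cpi}(n)/\ln n \leq \bM_{\ttext{BK}}(\{f_i\})$ by dividing the finite-time inequality through by $\ln n$ and taking $n\to\infty$ with $\epsilon_i,\delta_i$ fixed. The constant term contributes $0$ to the limit, so
\begin{equation*}
\limsup_n \frac{R_{\cpi}(n)}{\ln n} \leq \sum_{i:\mu_i\neq\mu^*}\frac{\Delta_i}{\ln\!\left(1+\frac{2\Delta_i}{S_i}\left(1-\frac{\epsilon_i+\delta_i}{\Delta_i}\right)\right)}.
\end{equation*}
Since this holds for every admissible choice of $(\epsilon_i,\delta_i)$, I would then let $\epsilon_i,\delta_i\downarrow 0$ and use continuity of $x\mapsto \Delta_i/\ln(1+(2\Delta_i/S_i)(1-x/\Delta_i))$ at $x=0$ to conclude
\begin{equation*}
\limsup_n \frac{R_{\cpi}(n)}{\ln n} \leq \sum_{i:\mu_i\neq\mu^*}\frac{\Delta_i}{\ln(1+2\Delta_i/S_i)} = \bM_{\ttext{BK}}(\{f_i\}).
\end{equation*}

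For the matching lower bound, I would first note that Theorem \ref{thm:finite-time} gives $R_{\cpi}(n)=O(\ln n)$ for any fixed instance $\{f_i\}\subset\mathcal{F}$, so in particular $R_{\cpi}(n)=o(n^\alpha)$ for every $\alpha>0$. Thus $\cpi$ is uniformly fast on $\mathcal{F}$, and \eqqref{eqn:lower-bound} (the specialization worked out in the preceding lemma via \eqqref{en:lower-b-m}) applies to give
\begin{equation*}
\liminf_n \frac{R_{\cpi}(n)}{\ln n} \geq \bM_{\ttext{BK}}(\{f_i\})=\sum_{i:\mu_i\neq\mu^*}\frac{\Delta_i}{\ln(1+2\Delta_i/S_i)}.
\end{equation*}
Combining the two inequalities yields the claimed equality, and in particular the limit exists.

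The proof is essentially a routine corollary once Theorem \ref{thm:finite-time} is in hand; the only subtlety is that the $(\epsilon_i,\delta_i)$ in that theorem cannot be sent to $0$ simultaneously with $n\to\infty$, because the residual constant blows up. Handling the limits in the correct order (first $n\to\infty$, then $\epsilon_i,\delta_i\to 0$) sidesteps this entirely and is the only place where care is needed. All the real difficulty lives in Theorem \ref{thm:finite-time} itself, not in this deduction.
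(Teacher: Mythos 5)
Your proposal is correct and follows essentially the same route as the paper's own proof: take $n\to\infty$ in Theorem \ref{thm:finite-time} with the $(\epsilon_i,\delta_i)$ fixed, then send $\epsilon_i+\delta_i\to 0$, and combine with the lower bound of \eqqref{eqn:lower-bound} specialized via \eqqref{en:lower-b-m}. Your explicit remark that Theorem \ref{thm:finite-time} implies $R_{\cpi}(n)=O(\ln n)$ and hence that \cpi\ is uniformly fast (so the lower bound actually applies) is a detail the paper leaves implicit, but it is the correct justification.
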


\begin{proof}
Fix the $(\epsilon_i, \delta_i)$ as feasible in the hypotheses of Theorem \ref{thm:finite-time}. In that case, we have
\begin{equation}
\limsup_n \frac{ R_{\cpi}(n) }{ \ln n } \leq \sum_{i:\mu_i \neq \mu^*}  \frac{ \Delta_i }{ \ln \left( 1 + \frac{ 2\Delta_i}{S_i}\left(1 - \frac{(\epsilon_i +\delta_i)}{\Delta_i}\right) \right) }.
\end{equation}
Taking the infimum as $\epsilon_i + \delta_i \to 0$ yields
\begin{equation}
\limsup_n \frac{ R_{\cpi}(n) }{ \ln n } \leq \sum_{i:\mu_i \neq \mu^*}  \frac{ \Delta_i }{ \ln \left( 1 + \frac{ 2\Delta_i}{S_i} \right) }.
\end{equation}
This, combined with the previous observation about the $\liminf$ in Eq. (\ref{en:lower-b-m}) completes the result.
\end{proof}

We next give an `$\epsilon$-free' version of the previous bound, which demonstrates the remainder term on the regret under \cpi \ is at worst $O((\ln n)^{3/4})$.

\begin{theorem}\label{thm:epsilon-free}
For each sub-optimal $i$ (i.e., $\mu_i \neq \mu^*)$, let $G_i = \min\left( S_*, S_i, \frac{1}{4} \Delta_i \right)$. For all $n \geq 3N$,
\begin{equation}
\begin{split}
& R_{\cpi}(n) \leq \left( \sum_{i:\mu_i \neq \mu^*} \frac{ \Delta_i}{ \ln \left( 1 + \frac{2 \Delta_i }{ S_i } \right) } \right) ( \ln n )\\
 & + \sum_{i:\mu_i \neq \mu^*} \left( \frac{8 G_i \Delta_i }{ (S_i+2\Delta_i) \ln\left(1+\frac{ 2\Delta_i}{S_i}\right)^2} + \frac{3S^3_* \Delta_i}{8G^3_i} \right) ( \ln n )^{3/4} \\
& +  \sum_{i:\mu_i \neq \mu^*} \left(\frac{ S_i \Delta_i }{G_i} \right) ( \ln n )^{1/4}  + 18  \sum_{i:\mu_i \neq \mu^*} \Delta_i.
\end{split}
\end{equation}
\end{theorem}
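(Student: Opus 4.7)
The plan is to derive this bound from Theorem~\ref{thm:finite-time} by specializing the slack parameters $(\epsilon_i, \delta_i)$ to explicit $n$-dependent values. The natural choice, which reproduces each of the three remainder terms displayed, is
\[
\epsilon_i \;=\; \delta_i \;=\; G_i \, (\ln n)^{-1/4}
\]
for every sub-optimal $i$. Feasibility against the hypotheses of Theorem~\ref{thm:finite-time} follows directly from the definition $G_i = \min(S_*, S_i, \Delta_i/4)$: we have $\epsilon_i < S_*$ and $\delta_i < S_i$ because $(\ln n)^{-1/4} < 1$ for $n \geq 3N \geq 6 > e$, and $\epsilon_i + \delta_i = 2 G_i (\ln n)^{-1/4} \leq \Delta_i/2 < \Delta_i$ because $G_i \leq \Delta_i/4$.

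With this substitution, the two ``tail'' pieces of the remainder in Theorem~\ref{thm:finite-time} simplify immediately: $(S_i/\delta_i)\Delta_i$ produces the claimed $(S_i \Delta_i/G_i)(\ln n)^{1/4}$ term, and $(3 S_*^3/(8\epsilon_i^3))\Delta_i$ produces the $(3 S_*^3 \Delta_i/(8 G_i^3))(\ln n)^{3/4}$ piece of the $(\ln n)^{3/4}$ coefficient, while the additive $18 \Delta_i$ passes through unchanged. The only remaining work is to control the leading logarithmic term.

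For that term I would Taylor-expand $F(h) = 1/\ln(1 + A_i(1-h))$ around $h = 0$, with $A_i = 2\Delta_i/S_i$ and $h_i = (\epsilon_i + \delta_i)/\Delta_i = 2 G_i (\ln n)^{-1/4}/\Delta_i \leq (1/2)(\ln n)^{-1/4}$. Combining the mean value theorem with the elementary bounds $\ln(1+A_i(1-h)) \geq (1/2)\ln(1+A_i)$ and $1 + A_i(1-h) \geq (1/2)(1+A_i)$, valid for $h \leq 1/2$, yields an estimate of the form
\[
F(h_i) \;\leq\; \frac{1}{\ln(1+A_i)} \;+\; \frac{C \, A_i h_i}{(1+A_i)\bigl(\ln(1+A_i)\bigr)^{2}}
\]
for an explicit constant $C$. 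Multiplying by $\Delta_i \ln n$ and using $A_i/(1+A_i) = 2\Delta_i/(S_i + 2\Delta_i)$ together with the expression for $h_i$ converts this into the leading $\Delta_i \ln n / \ln(1 + 2\Delta_i/S_i)$ term plus a correction proportional to $G_i \Delta_i (\ln n)^{3/4}/\bigl((S_i + 2\Delta_i)(\ln(1+2\Delta_i/S_i))^2\bigr)$, matching the structural form of the first piece of the $(\ln n)^{3/4}$ coefficient in the theorem.

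The main obstacle is pinning down the constant $C$ tightly enough to match the stated value $8$. The crudest Taylor bounds, such as $1/(1-x) \leq 1 + 2x$ combined with $-\ln(1-x) \leq 2x$, lose a factor of two and yield a $C$ that is too large; to recover the stated constant one must track the second-order remainder more carefully, for instance by using $-\ln(1-x) = x/(1-x) + O(x^2)$ with explicit control on the $O(x^2)$ term over the relevant range of $h_i \leq (1/2)(\ln n)^{-1/4}$. Once $C$ is pinned down, summing the resulting bound over all sub-optimal $i$ produces the stated inequality for all $n \geq 3N$.
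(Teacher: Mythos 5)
Your overall route is the same as the paper's: specialize Theorem~\ref{thm:finite-time} with $\epsilon_i = \delta_i = G_i\epsilon$ and $\epsilon = (\ln n)^{-1/4}$ (the paper keeps $\epsilon$ symbolic until the last line, which is cosmetic), read off the $(\ln n)^{1/4}$ and $3S_*^3\Delta_i/(8G_i^3)$ terms directly, and reduce everything to bounding the difference $\frac{1}{\ln(1+A_i(1-h_i))} - \frac{1}{\ln(1+A_i)}$ with $A_i = 2\Delta_i/S_i$. The feasibility check and the tail-term bookkeeping are correct.

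The gap is exactly where you flag it, and it is not closed. Your mean-value-theorem estimate with the crude bounds $1+A(1-\xi) \geq \tfrac12(1+A)$ and $\ln(1+A(1-\xi)) \geq \tfrac12\ln(1+A)$ gives $F'(\xi) \leq \frac{8A}{(1+A)\ln(1+A)^2}$, i.e.\ $C=8$ in your notation, whereas matching the theorem's coefficient requires $C=2$ (the stated $8$ arises as $C \cdot \frac{2\Delta_i}{S_i+2\Delta_i} \cdot \frac{2G_i}{\Delta_i}$ with $C=2$); so your sketch as written proves the theorem only with $32G_i\Delta_i$ in place of $8G_i\Delta_i$, and the proposed fix via ``$-\ln(1-x) = x/(1-x) + O(x^2)$'' is not carried out and does not obviously target the right quantity. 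The paper avoids Taylor/MVT altogether: Proposition~\ref{prop:log-bound} establishes the exact inequality
\begin{equation*}
\frac{1}{\ln(1 + Q(1 - \epsilon))} \leq \frac{1}{\ln(1 + Q)} + \frac{\epsilon}{1-\epsilon}\, \frac{Q}{ (1+Q) \ln(1+Q)^2}
\end{equation*}
by showing the ratio of the $\epsilon$-derivatives of the two sides is at most $1$, which reduces to the elementary inequality $\ln(1+x) \geq 2x/(2+x)$. Applied with $Q = 2\Delta_i/S_i$ and the substituted slack $2G_i\epsilon/\Delta_i \leq 1/2$, the only loss is $\frac{\epsilon'}{1-\epsilon'} \leq 2\epsilon'$, which yields the constant $8$. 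You would need to prove this proposition (or an equivalent sharp bound) to complete the argument; without it the stated coefficients are not obtained.
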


\begin{proof}[Proof of Theorem \ref{thm:epsilon-free}]
Let $0 < \epsilon < 1$, and for each $i$ let $\epsilon_i = \delta_i = G_i \epsilon$. Hence,
\begin{equation}
\ln \left( 1 + \frac{ 2\Delta_i}{S_i} \left(1- \frac{ (\epsilon_i +\delta_i) }{\Delta_i} \right)\right) = \ln \left( 1 + \frac{ 2\Delta_i}{S_i} \left(1- \epsilon \frac{2 G_i}{\Delta_i} \right)\right).
\end{equation}
Define
\begin{equation}
D_i = \frac{ 1 }{ \ln \left( 1 + \frac{ 2\Delta_i}{S_i} \left(1- \epsilon \frac{2 G_i}{\Delta_i} \right)\right) } -  \frac{1}{\ln\left(1 + \frac{ 2\Delta_i}{S_i}\right)}.
\end{equation}

 Note the following bound, that 
\begin{equation}
\begin{split}
D^i & \leq \left( \frac{2 G_i \epsilon }{\Delta_i-2 G_i \epsilon} \right) \frac{2\Delta_i}{ (S_i+2\Delta_i) \ln\left(1+\frac{ 2\Delta_i}{S_i}\right)^2} \\
& \leq \left( \frac{2 G_i \epsilon }{\frac{1}{2} \Delta_i } \right) \frac{2\Delta_i}{ (S_i+2\Delta_i) \ln\left(1+\frac{ 2\Delta_i}{S_i}\right)^2} \\
& =\frac{8 G_i \epsilon}{ (S_i+2\Delta_i) \ln\left(1+\frac{ 2\Delta_i}{S_i}\right)^2}
\end{split}
\end{equation}
This first inequality is proven separately as Proposition \ref{prop:log-bound} in the Appendix. The second inequality is simply the observation that $2 G_i \epsilon \leq 2 G_i \leq \frac{1}{2} \Delta_i$. Applying this bound to Theorem \ref{thm:finite-time} yields the following bound,
\begin{equation}
\begin{split}
R_{\cpi}(n) \leq & \left(  \sum_{i:\mu_i \neq \mu^*} \frac{ \Delta_i}{ \ln \left( 1 + \frac{2 \Delta_i }{ S_i } \right) } \right) (\ln n)\\
  & + 8 \left( \sum_{i:\mu_i \neq \mu^*} \frac{G_i \Delta_i }{ (S_i+2\Delta_i) \ln\left(1+\frac{ 2\Delta_i}{S_i}\right)^2} \right) \epsilon \ln n \\
& + \left( \sum_{i:\mu_i \neq \mu^*} \frac{ S_i \Delta_i }{G_i} \right) \epsilon^{-1} +  \frac{3}{8} S^3_* \left(\sum_{i:\mu_i \neq \mu^*} \frac{\Delta_i }{G^3_i} \right) \epsilon^{-3} \\
& + 18 \left( \sum_{i:\mu_i \neq \mu^*} \Delta_i \right).
\end{split}
\end{equation}
Taking $\epsilon = (\ln n)^{-1/4}$ completes the proof.
\end{proof}

\begin{proof}[Proof of Theorem 1]
For any $i$ such that $\mu_i \neq \mu^*$, recall that bandit $i$ is taken to be uniformly distributed on the interval $[a_i, b_i]$.  Let $(\epsilon_i, \delta_i)$ be as hypothesized. In this proof, we take $\pi = \cpi$ as defined above. Additionally, for each $i$ we define $W^i_k = \max_{t \leq k} X^i_t$ and $V^i_k = \min_{t \leq k} X^i_t$. We define the index function
\begin{equation}
u_i(k, j) = V^i_j + \frac{1}{2}( W^i_j - V^i_j ) k^{\frac{1}{j-2}}.
\end{equation}

We define the following events of interest, $\mathcal{J}^i_t = \{ u_i(t, T^i_\pi(t)) \geq \mu^* -\epsilon_i \}$ and $\mathcal{K}^i_s =  \{ V^i_{s} \leq a_i + \delta_i \}$. We now define the following quantities: For $n \geq 3N$,
\begin{equation}
\begin{split}
n^i_1(n, \epsilon_i, \delta_i) & = \sum_{t = 3N}^n \bo \{ \pi(t+1) = i, \mathcal{J}^i_t, \mathcal{K}^i_{T^i_\pi(t)} \} \\
n^i_2(n, \epsilon_i, \delta_i) & = \sum_{t = 3N}^n \bo \{ \pi(t+1) = i, \mathcal{J}^i_t, \overline{\mathcal{K}^i_{T^i_\pi(t)}} \} \\
n^i_3(n, \epsilon_i, \delta_i) & = \sum_{t = 3N}^n \bo \{ \pi(t+1) = i, \overline{\mathcal{J}^i_t}\}.
\end{split}
\end{equation}
Hence, we have the following relationship for $n \geq 3N$, that
\begin{equation}\label{eqn:t-n-relation}
\begin{split}
T^i_\pi(n+1) & = 3 + \sum_{t = 3N}^n \bo  \{ \pi(t+1) = i \} \\
& = 3 + n^i_1(n, \epsilon_i, \delta_i) + n^i_2(n, \epsilon_i, \delta_i) + n^i_3(n, \epsilon_i, \delta_i).
\end{split}
\end{equation}
The proof proceeds by bounding, in expectation, each of the three terms.

Observe that, by the structure of the index function $u_i$, 
\begin{equation}
\begin{split}
& \bo \{ \pi(t+1) = i, \mathcal{J}^i_t, \mathcal{K}^i_{ T^i_\pi(t) } \}\\
 & \leq \bo \left\{ \pi(t+1) = i, a_i + \delta_i + \frac{1}{2}( b_i - a_i ) t^{\frac{1}{T^i_\pi(t)-2}} \geq \mu^* - \epsilon_i \right\} \\
 & = \bo \left\{ \pi(t+1) = i, T^i_\pi(t) \leq \frac{ \ln t }{ \ln \left( \frac{ 2\mu^*-2a_i - 2\epsilon_i - 2\delta_i }{b_i - a_i} \right) } + 2 \right\}\\
 & \leq \bo \left\{ \pi(t+1) = i, T^i_\pi(t) \leq \frac{ \ln n }{ \ln \left( \frac{ 2\mu^*-2a_i - 2\epsilon_i - 2\delta_i }{b_i - a_i} \right) } + 2 \right\}.
\end{split}
\end{equation}
Hence,
\begin{equation}
\begin{split}
& n^i_1(n, \epsilon_i, \delta_i) \leq \\
& \sum_{t = 3N}^n \bo \left\{ \pi(t+1) = i, T^i_\pi(t) \leq \frac{ \ln n }{ \ln \left( \frac{ 2\mu^*-2a_i - 2\epsilon_i - 2\delta_i }{b_i - a_i} \right) } + 2 \right\} \\
& \leq \sum_{t = 1}^n \bo \left\{ \pi(t+1) = i, T^i_\pi(t) \leq \frac{ \ln n }{ \ln \left( \frac{ 2\mu^*-2a_i - 2\epsilon_i - 2\delta_i }{b_i - a_i} \right) } + 2 \right\} \\
& \leq  \frac{ \ln n }{ \ln \left( \frac{ 2\mu^*-2a_i - 2\epsilon_i - 2\delta_i }{b_i - a_i} \right) } + 2 + 2.
\end{split}
\end{equation}
The last inequality follows, observing that $T^i_\pi(t)$ may be expressed as the sum of $\pi(t) = i$ indicators, and seeing that the additional condition bounds the number of non-zero terms in the above sum. The additional $+2$ simply accounts for the $\pi(1) = i$ term and the $\pi(n+1) = i$ term. 

 Note, this bound is sample-path-wise.


For the second term,
\begin{equation}\label{en:second-term}
\begin{split}
n^i_2(n, \epsilon_i, \delta_i) & \leq \sum_{t = 3N}^n \bo \{ \pi(t+1) = i,  \overline{\mathcal{K}^i_{T^i_\pi(t)}} \} \\
& =  \sum_{t = 3N}^n \sum_{k = 2}^t \bo \{ \pi(t+1) = i, \overline{\mathcal{K}^i_k}, T^i_\pi(t) = k \} \\
& =  \sum_{t = 3N}^n \sum_{k = 2}^t \bo \{ \pi(t+1) = i, T^i_\pi(t) = k \}\bo \{ \overline{\mathcal{K}^i_k} \} \\
& \leq \sum_{k = 2}^n \bo \{ \overline{\mathcal{K}^i_k} \} \sum_{t = k}^n \bo \{ \pi(t+1) = i, T^i_\pi(t) = k \} \\
& \leq \sum_{k = 2}^n \bo \{ \overline{\mathcal{K}^i_k} \}\\
& = \sum_{k = 2}^n \bo \{ V^i_k > a_i + \delta_i \}.
\end{split}
\end{equation}
The last inequality follows as, for fixed $k$, $\{ \pi(t+1) = i, T^i_\pi(t) = k \}$ may be true for at most one value of $t$. It follows then that
\begin{equation}
\begin{split}
\bE \left[ n^i_2(n, \epsilon_i, \delta_i) \right] & \leq \sum_{k = 2}^n \bP \left( V^i_k > a_i + \delta_i \right) \\
& = \sum_{k = 2}^n \bP \left( X^i_1 > a_i + \delta_i \right)^k \\
& = \sum_{k = 2}^n \left( 1 - \frac{ \delta_i }{ b_i - a_i } \right)^k \\
& \leq \sum_{k = 1}^\infty \left( 1 - \frac{ \delta_i }{ b_i - a_i } \right)^k = \frac{ b_i - a_i }{\delta_i} - 1 < \infty. \\
\end{split}
\end{equation}

To bound the $n^i_3$ term, observe that in the event $\pi(t+1) = i$, from the structure of the policy it must be true that $u_i(t, T^i_\pi(t)) = \max_j u_j(t, T^j_\pi(t))$. Thus, if $i^*$ is some bandit such that $\mu_{i^*} = \mu^*$, $u_{i^*}(t, T^{i^*}_\pi(t) ) \leq u_i(t, T^i_\pi(t))$. In particular, we take $i^*$ to be the optimal bandit realizing the minimal span $b_{i^*} - a_{i^*}$. It follows,
\begin{equation}
\begin{split}
n^i_3(n, \epsilon_i, \delta_i) & \leq \sum_{t = 3N}^n \bo \{ \pi(t+1) = i, u_{i^*}(t, T^{i^*}_\pi(t)) < \mu^* - \epsilon_i \} \\
& \leq \sum_{t = 3N}^n \bo \{ u_{i^*}(t, T^{i^*}_\pi(t)) < \mu^* -  \epsilon_i \} \\
& \leq \sum_{t = 3N}^n \bo \{ u_{i^*}(t, s) < \mu^* - \epsilon_i \text{ for some } 3 \leq s \leq t\}.
\end{split}
\end{equation}
The last step follows as for $t$ in this range, $3 \leq T^{i^*}_\pi(t) \leq t$. Hence
\begin{equation}\label{eqn:n-4-bound}
\begin{split}
& \bE \left[ n^i_3(n, \epsilon_i, \delta_i) \right] \\
& \leq \sum_{t = 3N}^n \bP \left( u_{i^*}(t, s) < \mu^* - \epsilon_i \text{ for some } 3 \leq s \leq t \right) \\
& \leq \sum_{t = 3N}^n \sum_{s = 3}^t \bP \left( u_{i^*}(t, s) < \mu^* - \epsilon_i \right).
\end{split}
\end{equation}
Here we may make use of the following result:
\begin{lemma}
Let $X_1, X_2, \ldots$ be i.i.d. $\text{Unif}[a,b]$ random variables, with $a < b$, $a$ and $b$ finite. For $k \geq 2$, let $W_k = \max_{t \leq k} X_t$ and $V_k = \min_{t \leq k} X_t$. In that case, the joint density of $(W_k, V_k)$ is given by:
\begin{equation}
f_k(w,v) = \begin{cases} k(k-1)(b-a)^{-k}(w-v)^{k-2}  &\mbox{if } v \leq w \\ 
0 & \mbox{ else.} \end{cases}
\end{equation}
\end{lemma}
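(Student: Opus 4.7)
The statement is the classical joint density of the extreme order statistics of an i.i.d. uniform sample, so the plan is simply to derive it from first principles in a manner consistent with the notation of the paper. I would work with the ``survival-type'' joint probability
\[
G_k(v,w)=\mathbb{P}(V_k\geq v,\; W_k\leq w),
\]
which has a clean product form: for $a\leq v\leq w\leq b$, the event $\{V_k\geq v, W_k\leq w\}$ is exactly $\{X_t\in[v,w]\text{ for all }t\leq k\}$, so by independence and uniformity
\[
G_k(v,w)=\left(\frac{w-v}{b-a}\right)^k.
\]

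From this I would recover the joint CDF $F_k(v,w)=\mathbb{P}(V_k\leq v,W_k\leq w)$ on the region $a\leq v\leq w\leq b$ via
\[
F_k(v,w)=\mathbb{P}(W_k\leq w)-\mathbb{P}(V_k>v,W_k\leq w)=\left(\frac{w-a}{b-a}\right)^k-\left(\frac{w-v}{b-a}\right)^k,
\]
and then differentiate: a direct computation gives
\[
\frac{\partial F_k}{\partial v}(v,w)=\frac{k(w-v)^{k-1}}{(b-a)^k},\qquad \frac{\partial^2 F_k}{\partial w\,\partial v}(v,w)=\frac{k(k-1)(w-v)^{k-2}}{(b-a)^k},
\]
which is the claimed density on $\{v\leq w\}\cap[a,b]^2$, and zero otherwise since $V_k\leq W_k$ always.

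As a sanity check (and, alternatively, a second route if one prefers to start from order statistics), one may note that $(V_k,W_k)=(X_{(1;k)},X_{(k;k)})$, whose joint density is obtained from the full order-statistic density $k!/(b-a)^k$ on the simplex by integrating out the middle $k-2$ coordinates between $v$ and $w$; this integral equals $(w-v)^{k-2}/(k-2)!$, reproducing the same expression. There is no substantive obstacle here: the only care required is keeping track of the region of support ($v\leq w$ and both in $[a,b]$), and noting the degenerate case $k=2$, where the exponent $k-2=0$ correctly yields the constant density $2/(b-a)^2$ on the triangle.
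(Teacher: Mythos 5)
Your derivation is correct: the survival-type probability $\mathbb{P}(V_k\geq v,\,W_k\leq w)=\left(\frac{w-v}{b-a}\right)^k$ is right, the passage to the joint CDF and the mixed partial derivative are computed correctly, and the resulting density $k(k-1)(b-a)^{-k}(w-v)^{k-2}$ on $\{a\leq v\leq w\leq b\}$ matches the statement (your order-statistics cross-check is also sound). Note that the paper itself offers no proof of this lemma at all --- it is invoked as a classical fact about the extreme order statistics of a uniform sample --- so there is nothing to compare against; your argument simply supplies the standard justification the authors omitted.
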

We therefore have that
\begin{equation}
\begin{split}
& \bP \left( u_{i^*}(t, s) < \mu^* - \epsilon_i \right)\\
 & = \bP \left( V^{i^*}_s + \frac{1}{2}\left( W^{i^*}_s - V^{i^*}_s \right) t^{1/(s-2)} < \mu^* - \epsilon_i \right) \\
& = \int_{a_{i^*}}^{\mu^* - \epsilon_i} \int_{v}^{\min\left(b_{i^*}, v + 2 \frac{ (\mu^* - \epsilon_i) - v }{t^{1/(s-2)}} \right)} f_s(w,v) dw dv \\
& \leq \int_{a_{i^*}}^{\mu^* - \epsilon_i} \int_{v}^{v + 2 \frac{ (\mu^* - \epsilon_i) - v }{t^{1/(s-2)}}} f_s(w,v) dw dv \\
& = \frac{1}{2} t^{-\frac{(s-1)}{(s-2)}} \left( 2 \frac{(\mu^* - \epsilon_i) - a_{i^*}}{ b_{i^*} - a_{i^*} }\right)^s \\
& = \frac{1}{2} t^{-1} t^{-1/(s-2)} \left( 1 - \frac{2\epsilon_i}{b_{i^*} - a_{i^*}} \right)^s.
\end{split}
\end{equation}
The last step is simply the observation that $\mu^* = (a_{i^*} + b_{i^*})/2$. For convenience, let $\alpha = 2\epsilon_i/(b_{i^*} - a_{i^*})$. We therefore have that
\begin{equation}
\begin{split}
\sum_{s = 3}^t \bP \left( u_{i^*}(t, s) < \mu^* - \epsilon \right) &  \leq \sum_{s = 3}^t  \frac{1}{2} t^{-1} t^{-1/(s-2)} (1-\alpha)^s \\
&  \leq \sum_{s = 1}^{t-2}  \frac{1}{2} t^{-1} t^{-1/s} (1-\alpha)^{s+2} \\
& \leq \frac{1}{2} t^{-1} (1-\alpha)^2 \sum_{s = 1}^\infty t^{-1/s} (1-\alpha)^s.
\end{split}
\end{equation}
Hence, from Eq. \eqref{eqn:n-4-bound} and the above,
\begin{equation}
\begin{split}
\bE \left[ n^i_3(n, \epsilon_i, \delta_i) \right] & \leq \sum_{t = 6}^n \frac{1}{2} t^{-1} (1-\alpha)^2 \sum_{s = 1}^\infty t^{-1/s} (1-\alpha)^s \\
& \leq \frac{1}{2} (1-\alpha)^2 \sum_{t = 6}^n t^{-1} \sum_{s = 1}^\infty t^{-1/s} (1-\alpha)^s \\
& \leq (1 - \alpha)^2 \left( 15 + \frac{3}{\alpha^3} \right).
\end{split}
\end{equation}
The last step is a bound proved separately as Proposition \ref{prop:summation-bound} in the Appendix. Observing further that $1-\alpha \leq 1$, we have finally that
\begin{equation}
\bE \left[ n^i_3(n, \epsilon_i, \delta_i) \right] \leq 15 + \frac{3}{\alpha^3} = 15 + \frac{3}{8} \frac{(b_{i^*} - a_{i^*})^3}{\epsilon_i^3}.
\end{equation}
Observing that $T^i_\pi(n) \leq T^i_\pi(n+1)$, bringing the three terms together we have that
\begin{equation}
\bE \left[ T^i_\pi(n) \right] \leq \frac{ \ln n }{ \ln \left( \frac{ 2\mu^*-2a_i - 2\epsilon_i - 2\delta_i }{b_i - a_i} \right) }  + \frac{ b_i - a_i }{\delta_i} + \frac{3}{8} \frac{(b_{i^*} - a_{i^*})^3}{\epsilon_i^3} + 18.
\end{equation}
The result then follows from the definition of regret, Eq. \eqref{eqn:regret}, and the observation again that $\mu_i = (b_i + a_i)/2$.
\end{proof}

At various points in the results so far, choices of convenience were made with the purpose of keeping associated constants and coefficients `nice'. The techniques and results above may actually be refined slightly to present a somewhat stronger result on the remainder term, at the cost of more complicated coefficients. In particular,

\begin{theorem}\label{thm:remainder-refined}
For any $\beta > 0$,
\begin{equation}
R_{\cpi}(n) \leq \sum_{i:\mu_i \neq \mu^*} \frac{ \Delta_i \ln n }{ \ln \left( 1 + \frac{2 \Delta_i }{ S_i } \right) } + o(( \ln n )^{2/3 + \beta}).
\end{equation}
\end{theorem}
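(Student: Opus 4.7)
The plan is to revisit the three-term decomposition $T^i_\pi(n+1) = 3 + n^i_1 + n^i_2 + n^i_3$ from the proof of Theorem \ref{thm:finite-time} and sharpen the bound on $\bE[n^i_3]$. Under the substitution $\epsilon_i = \delta_i = G_i \epsilon$ used to derive Theorem \ref{thm:epsilon-free}, the remainder decomposes into $O(\epsilon \ln n)$ from the correction $D_i$ to the leading log-term, $O(1/\epsilon)$ from $\bE[n^i_2]$, and $O(1/\epsilon^3)$ from $\bE[n^i_3]$. Balancing $\epsilon \ln n \sim 1/\epsilon^3$ forced $\epsilon = (\ln n)^{-1/4}$ and produced the $(\ln n)^{3/4}$ remainder. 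The idea for reaching $o((\ln n)^{2/3+\beta})$ is to upgrade the $\bE[n^i_3]$ estimate to $O(1/\epsilon^2)$, so that the balance $\epsilon \ln n \sim 1/\epsilon^2$ at $\epsilon = (\ln n)^{-1/3}$ yields a residual of exact order $(\ln n)^{2/3}$.

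The sharpening is obtained by swapping the order of summation. After applying the joint density of the order statistics $(V^{i^*}_s, W^{i^*}_s)$, the proof of Theorem \ref{thm:finite-time} arrives at
\[
\bE[n^i_3] \leq \tfrac{1}{2}(1-\alpha)^2 \sum_{t=6}^n t^{-1} \sum_{s=1}^\infty t^{-1/s}(1-\alpha)^s,
\]
with $\alpha = 2\epsilon_i/S_*$, and bounds the inner sum in a way that produces a $1/\alpha^3$ factor. I would instead interchange the sums to get
\[
\bE[n^i_3] \leq \tfrac{1}{2}(1-\alpha)^2 \sum_{s=1}^\infty (1-\alpha)^s \sum_{t=6}^n t^{-1-1/s},
\]
bound the inner $t$-sum uniformly in $n$ by $\zeta(1+1/s) \leq C s$ (using $\zeta(1+x) = x^{-1} + \gamma + O(x)$ near $x=0$), and then apply the closed form $\sum_{s\geq 1} s(1-\alpha)^s = (1-\alpha)/\alpha^2$ to obtain $\bE[n^i_3] \leq C'(1-\alpha)^3/\alpha^2 = O(1/\epsilon_i^2)$, uniformly in $n$.

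Carrying this improved estimate through the scheme of Theorem \ref{thm:epsilon-free} replaces the $\epsilon^{-3}$ contribution by $\epsilon^{-2}$, so that
\[
R_{\cpi}(n) - \sum_{i:\mu_i \neq \mu^*} \frac{\Delta_i \ln n}{\ln(1+2\Delta_i/S_i)} \leq O(\epsilon \ln n) + O(\epsilon^{-1}) + O(\epsilon^{-2}) + O(1).
\]
Selecting $\epsilon = (\ln n)^{-1/3}$ equalizes the first and third terms at order $(\ln n)^{2/3}$, dominating the $O((\ln n)^{1/3})$ middle term, so the residual is $O((\ln n)^{2/3})$, which is $o((\ln n)^{2/3+\beta})$ for every $\beta > 0$.

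The main obstacle is recognizing that the original $1/\alpha^3$ bound on $\bE[n^i_3]$ is not tight and can be improved to $1/\alpha^2$ by merely reversing the summation order; once this is seen, the remaining work (the $\zeta$-bound and the closed-form geometric-type identity) is elementary and uniform in $n$, so no delicate tail analysis is needed and the whole argument slots into the framework already established by Theorems \ref{thm:finite-time} and \ref{thm:epsilon-free}.
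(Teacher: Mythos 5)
Your proposal is correct, and it takes a genuinely different route for the key technical step. The paper also returns to the double sum $\sum_{t} t^{-1}\sum_{s} t^{-1/s}(1-\alpha)^s$ from the proof of Theorem \ref{thm:finite-time}, but it bounds it by rerunning the splitting argument of Proposition \ref{prop:summation-bound} with general parameters $p<1$, $q>1$, obtaining a remainder of order $(\ln n)^{(1+\gamma)/(2+\gamma)}$ with $\gamma = q/p$, and then letting $\gamma \downarrow 1$ to push the exponent arbitrarily close to $2/3$ (never attaining it, hence the $o((\ln n)^{2/3+\beta})$ form of the statement). You instead apply Tonelli to interchange the $t$- and $s$-sums, bound $\sum_{t\geq 3} t^{-1-1/s} \leq \zeta(1+1/s) \leq 1+s$ uniformly in $n$, and use $\sum_{s\geq 1} s(1-\alpha)^s = (1-\alpha)/\alpha^2$ to get $\bE[n^i_3] = O(\alpha^{-2})$ rather than the paper's $O(\alpha^{-3})$. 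All of these steps are valid (the terms are nonnegative, so the interchange is justified), and the resulting balance $\epsilon \ln n \sim \epsilon^{-2}$ at $\epsilon = (\ln n)^{-1/3}$ gives a remainder that is genuinely $O((\ln n)^{2/3})$ --- a strictly stronger conclusion than the theorem asserts, which of course implies $o((\ln n)^{2/3+\beta})$ for every $\beta>0$. Your argument is both simpler (no two-parameter optimization) and sharper; it also shows that the $6/\alpha^3$ constant in Proposition \ref{prop:summation-bound} is not tight, since the supremum over $n$ of that double sum is in fact $\Theta(\alpha^{-2})$.
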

\begin{proof}
Note that, given the result of Theorem \ref{thm:epsilon-free}, it suffices to take $\beta \leq 1/12$.

Building on the proof of Theorem \ref{thm:finite-time}, taking $\alpha = 2\epsilon_i/(b_{i^*} - a_{i^*}) = 2\epsilon_i/S_*$ where $i^*$ is the optimal bandit that realizes the smallest value of $b_{i^*} - a_{i^*}$, we have that
\begin{equation}
\begin{split}
\bE \left[ T^i_\pi(n) \right] & \leq \frac{ \ln n }{ \ln \left( \frac{ 2\mu^*-2a_i - 2\epsilon_i - 2\delta_i }{b_i - a_i} \right) } + \frac{ b_i - a_i }{\delta_i} \\
& + \frac{1}{2} (1-\alpha)^2 \sum_{t = 6}^n t^{-1} \sum_{s = 1}^\infty t^{-1/s} (1-\alpha)^s + 3 \\
& \leq \frac{ \ln n }{ \ln \left( 1 + \frac{2\Delta_i}{ S_i } \left( 1 - \frac{\epsilon_i + \delta_i}{\Delta_i} \right ) \right) } + \frac{ S_i }{\delta_i} \\
& + \frac{1}{2} \sum_{t = 6}^n t^{-1} \sum_{s = 1}^\infty t^{-1/s} (1-\alpha)^s + 3. \\
\end{split}
\end{equation}
The proof of Theorem \ref{thm:finite-time} then proceeded to bound the above double sum using Proposition \ref{prop:summation-bound}. Utilizing the proof of Proposition \ref{prop:summation-bound} (but without choosing specific values of $p < 1$, $q > 1$ to render `nice' coefficients), we have
\begin{equation}
\begin{split}
& \sum_{t = 6}^n t^{-1} \sum_{s = 1}^\infty t^{-1/s} (1-\alpha)^s\\
 & \leq \left( \left( \frac{1}{e} \frac{p+q}{1-p} \right)^{ \frac{p+q}{1-p} } + \frac{1}{\alpha} \left( \frac{1}{e \alpha} \frac{q}{p} \right)^{ \frac{q}{p} } \right) \frac{1}{q-1}\\
& = \alpha^{-1-\frac{q}{p}} C_1(p, q) + C_2(p,q) \\
& = \epsilon_i^{-1-\frac{q}{p}} \left( \frac{ S_* }{2} \right)^{1+\frac{q}{p} } C_1(p, q) + C_2(p,q).
\end{split}
\end{equation}
Where for convenience we are defining $C_1, C_2$ as the associated functions of $p, q$. Note, they are finite for $p < 1$, $q > 1$. Let $0 < \epsilon < 1$ and define $G_i = \min\left( S_*, S_i, \frac{1}{4} \Delta_i \right)$ as in Theorem \ref{thm:epsilon-free}. Taking $\epsilon_i = \delta_i = \epsilon G_i$, we have the following bound (utilizing Proposition \ref{prop:log-bound} as in the proof of Theorem \ref{thm:epsilon-free}):
\begin{equation}
\begin{split}
\bE \left[ T^i_\pi(n) \right] \leq &  \frac{\ln n}{\ln\left(1 + \frac{ 2\Delta_i}{S_i}\right)} + \frac{8 G_i \epsilon \ln n}{ (S_i+2\Delta_i) \ln\left(1+\frac{ 2\Delta_i}{S_i}\right)^2} \\
& + \frac{S_i}{G_i} \epsilon^{-1}  +  \epsilon^{-1-\frac{q}{p}} \left( \frac{ S_* }{2 G_i} \right)^{1+\frac{q}{p} } C_1(p, q) \\
& + C_2(p,q) + 3.
\end{split}
\end{equation}
At this point, taking $\epsilon = (\ln n)^{-p/(2p + q)}$ yields the following
\begin{equation}
\begin{split}
\bE \left[ T^i_\pi(n) \right] \leq & \frac{\ln n}{\ln\left(1 + \frac{ 2\Delta_i}{S_i}\right)} + \frac{8 G_i (\ln n)^{ \frac{p+q}{2p + q} }}{ (S_i+2\Delta_i) \ln\left(1+\frac{ 2\Delta_i}{S_i}\right)^2} \\
& + \frac{S_i}{G_i} (\ln n)^{\frac{p}{2p+q}}  +  (\ln n)^{ \frac{p+q}{2p + q} } \left( \frac{ S_* }{2 G_i} \right)^{1+\frac{q}{p} } C_1(p, q)\\
& + C_2(p,q) + 3,
\end{split}
\end{equation}
or more conveniently,
\begin{equation}
\bE \left[ T^i_\pi(n) \right] \leq \frac{\ln n}{\ln\left(1 + \frac{ 2\Delta_i}{S_i}\right)} + O(  (\ln n)^{ \frac{p+q}{2p + q} } ) + O(  (\ln n)^{ \frac{p}{2p + q} } ).
\end{equation}
Taking $q = \gamma p$, where $(\gamma, p)$ is chosen such that $1/\gamma < p < 1$, the above yields (via the definition of regret, Eq. \eqref{eqn:regret}):
\begin{equation}
R_\pi(n) \leq \sum_{i:\mu_i \neq \mu^*} \frac{ \Delta_i \ln n}{\ln\left(1 + \frac{ 2\Delta_i}{S_i}\right)} + O(  (\ln n)^{ \frac{1+\gamma}{2 + \gamma} } ) + O(  (\ln n)^{ \frac{1}{2 + \gamma} } ).
\end{equation}
At this point, note that taking $\gamma = 2$ recovers the remainder order given in Theorem \ref{thm:epsilon-free}. For a given $1/12 > \beta > 0$, taking $\gamma < (1 + 6 \beta)/(1 - 3\beta)$ yields $(1 + \gamma)/(2 + \gamma) < 2/3 + \beta$, and completes the proof.
\end{proof}

\section{Simulation Comparisons of the \upi\   Sampling}
 In order to obtain a picture  of the benefits of the \upi\ sampling policy, we compared it with the best known alternatives. In both figures below, curve ($\ell$) ($\ell=1,2,3$) is a plot of the average (over $20,000$ repetitions in Fig. 1 and  $10,000$ repetitions in Fig. 2) regret of sampling using  policies \upi,  \krpi , and \cpi,  respectively; where policy
  \krpi , is based on the sampling policy in 
  \cite{rmab1995}, and 
  \cpi\ is a recently shown, cf. \cc{chk2015},  asymptotically 
 optimal policy for the case in which the population outcomes 
  distributions are normal with unknown means and unknown variances. 
  Specifically, 
  given $t$ samples from bandit $i$ at round (global time) $n$,
 \upi\ \cpi\ and \krpi\   are maximum index based policies with indices 
$\cui(n, t)$, $\ui_{\text{CHK}}$, and  $\ui_{\text{KR}}$  
where the first is defined by Eq. (16) and   the other two are given by: 
$\ui_{\text{CHK}} (n,t)= \bar{X}^i_{t} +     S^i(t) \sqrt{ n^{\frac{2}{t - 2}}-1}  $ and
$\ui_{\text{KR}} (n,t)= \bar{X}^i_{t} +     S^i(t)\sqrt{ \frac{2\ln n}{t}}\, ,$ 
 where  $S^i(t) = (\sum_{k = 1}^t \left(X^i_k - \bar{X}^i_t\right)^2/t)^{1/2}$.

 \begin{figure}[h!]
\begin{center}
\includegraphics[width=.75\textwidth]{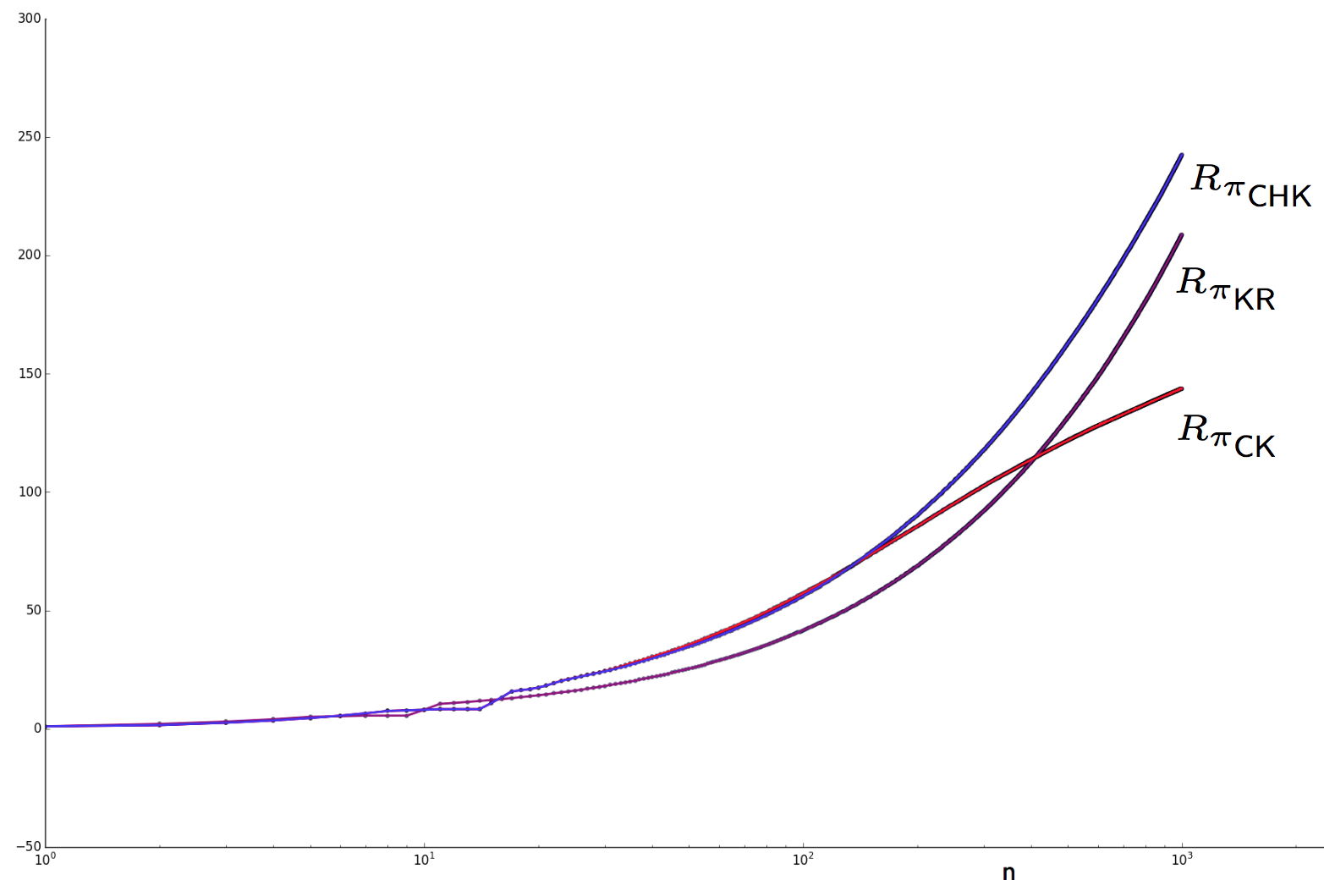}
\caption{Short Time Horizon:\small Numerical regret comparison of \upi , \krpi  , and \cpi ,   for the 6 bandits with parameters given in  Table 1.
Average values over $20,000$ repetitions.
} 
\end{center}
\end{figure}
  \begin{center} 
{ \small
 \begin{tabular}{|l||r|r|r|r|r|r|}
  \hline
   $i $ & 1 & 2 & 3 & 4 & 5 & 6\\
  \hline
   $a_i $ & 0 & 0 & 0 & 1 & 1 & 1\\
       \hline
$b_i$  & 10 & 9 & 8 & 9.5 & 10 & 5\\
  \hline  \hline    
  \multicolumn{7}{l}{Table  1}    \\
\end{tabular}
}
 \end{center}
 
  \begin{figure}[h!]
\begin{center}
\includegraphics[width=.75\textwidth]{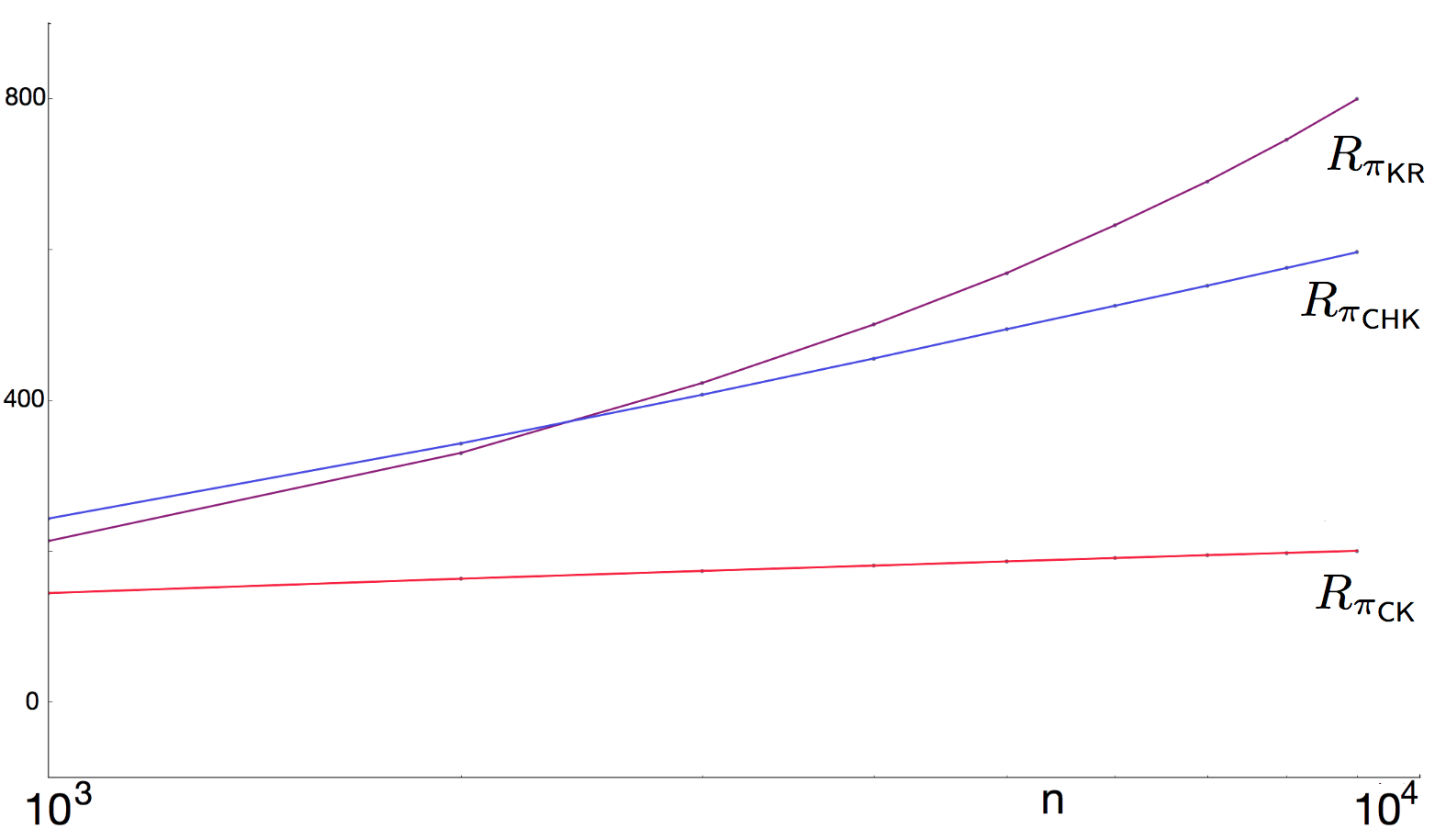}
\caption{Log Time Horizon:\small Numerical regret comparison of  \upi , \krpi  , and \cpi ,  for the 6 bandits with parameters given in  Table 1.
Average values over $10,000$ repetitions.
} 
\end{center}
\end{figure}
These graphs clearly illustrate the benefit of using the optimal policy.

\subsection{Acknowledgments.}
PhD student Daniel Pirutinsky did the simulation work underlying  Figures 1 and 2. Support for this project was provided by 
 the National Science Foundation (NSF grant CMMI-14-50743).

%


\subsection{Additional Proofs}

\begin{proposition}\label{prop:summation-bound}
For $0 < \alpha < 1$, for all $n \geq 3$,
\begin{equation}
\sum_{t = 3}^n \frac{1}{t} \sum_{s = 1}^\infty t^{-1/s} (1-\alpha)^s \leq 30 + \frac{6}{\alpha^3}.
\end{equation}
\end{proposition}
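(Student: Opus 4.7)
The plan is to interchange the order of summation (justified by nonnegativity of all terms) and then bound the resulting inner sum over $t$ by an elementary integral estimate. The double sum then factors completely in a clean way.

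After the interchange the expression becomes $\sum_{s=1}^{\infty}(1-\alpha)^{s}\sum_{t=3}^{n} t^{-(s+1)/s}$. For each fixed $s\ge 1$, the function $t\mapsto t^{-(s+1)/s}$ is decreasing, so by the standard integral-comparison bound $\sum_{t=3}^{\infty} t^{-(s+1)/s} \le \int_{2}^{\infty} t^{-(s+1)/s}\,dt = s\cdot 2^{-1/s} \le s$. Plugging back in, the whole expression is at most $\sum_{s=1}^{\infty} s(1-\alpha)^{s} = (1-\alpha)/\alpha^{2} \le 1/\alpha^{2}$, which is in fact dominated by $6/\alpha^{3}$ (and hence by $30 + 6/\alpha^{3}$) for every $\alpha \in (0,1)$. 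So the proof is effectively three short lines: Tonelli, a one-variable integral, and the identity $\sum s x^{s} = x/(1-x)^{2}$.

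There is no serious obstacle here; the bound turns out to be substantially tighter than stated. My guess is that the authors phrase the bound as $30 + 6/\alpha^{3}$ because it is obtained as a by-product of the split-type estimate that reappears in the proof of Theorem \ref{thm:remainder-refined}, where one splits the inner sum at a $t$-dependent threshold $s_{0}(t)$, bounding the head $\sum_{s\le s_{0}} t^{-1/s}(1-\alpha)^{s}$ via the monotonicity $t^{-1/s}\le t^{-1/s_{0}}$ together with $\sum(1-\alpha)^{s}\le 1/\alpha$, and the tail $\sum_{s>s_{0}}t^{-1/s}(1-\alpha)^{s}$ via $t^{-1/s}\le 1$ together with the geometric tail $(1-\alpha)^{s_{0}}/\alpha$. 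With the parameters $(p,q)$ (as they appear in Theorem \ref{thm:remainder-refined}) controlling the choice of $s_{0}$, one obtains a bound of the shape $C_{1}(p,q) + C_{2}(p,q)/\alpha^{1+q/p}$, from which the constants $30$ and $6/\alpha^{3}$ can be read off by a specific choice of $(p,q)$. For the present proposition, however, the Tonelli/integral-comparison route is both shorter and gives a sharper estimate, so that is what I would use.
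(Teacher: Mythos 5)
Your proof is correct, and it takes a genuinely different --- and sharper --- route than the paper's. The paper keeps the order of summation fixed: for each $t$ it splits the inner $s$-sum at the $t$-dependent threshold $\lfloor (\ln t)^p\rfloor$, bounds the head by $(\ln t)^p\,t^{-1/(\ln t)^p}=(\ln t)^p e^{-(\ln t)^{1-p}}$ and the tail by $\alpha^{-1}(1-\alpha)^{(\ln t)^p}$, converts both into multiples of $(\ln t)^{-q}$ via elementary pointwise inequalities, and only then sums $\tfrac{1}{t}(\ln t)^{-q}$ over $t$ by integral comparison; the constants $30$ and $6/\alpha^{3}$ come from the specific choice $q=2p$, $p=0.55$. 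So your guess about their method is essentially right, down to the role of $(p,q)$: what their route buys is the tunable intermediate bound $\alpha^{-1-q/p}C_1(p,q)+C_2(p,q)$, which is recycled verbatim in the proof of Theorem \ref{thm:remainder-refined}. Your Tonelli-plus-integral argument instead gives $\sum_{s\ge 1}(1-\alpha)^s\sum_{t\ge 3}t^{-(s+1)/s}\le \sum_{s\ge 1}s\,2^{-1/s}(1-\alpha)^s\le (1-\alpha)/\alpha^{2}\le \alpha^{-2}\le 6\alpha^{-3}$, every step of which checks out, and the resulting $\alpha^{-2}$ is strictly stronger than anything the splitting method produces. It is worth noting the downstream payoff: feeding $O(\alpha^{-2})=O(\epsilon^{-2})$ into the argument of Theorem \ref{thm:remainder-refined} and balancing $\epsilon\ln n$ against $\epsilon^{-2}$ at $\epsilon=(\ln n)^{-1/3}$ yields a remainder of $O((\ln n)^{2/3})$ outright, eliminating both the $\beta$ and the $(p,q)$ bookkeeping. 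In short: your proof is valid, shorter, and gives a sharper constant; the only thing you lose is a piece of machinery the paper happens to reuse elsewhere, and even that reuse is rendered unnecessary by your bound.
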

\begin{proof}[Proof of Proposition \ref{prop:summation-bound}]
Let $1 > p > 0$. We have
\begin{equation}
\begin{split}
& \sum_{s = 1}^\infty t^{-1/s} (1-\alpha)^s \\
& = \sum_{s = 1}^{\left \lfloor{ \ln(t)^p }\right \rfloor } t^{-1/s} (1-\alpha)^s + \sum_{s = \left \lceil{ \ln(t)^p }\right \rceil}^\infty t^{-1/s} (1-\alpha)^s \\
& \leq \sum_{s = 1}^{\left \lfloor{ \ln(t)^p }\right \rfloor } t^{-1/s} + \sum_{s = \left \lceil{ \ln(t)^p }\right \rceil}^\infty (1-\alpha)^s \\
& \leq \left \lfloor{ \ln(t)^p }\right \rfloor t^{-1/\left \lfloor{ \ln(t)^p }\right \rfloor} + \frac{1}{\alpha}(1-\alpha)^{ \left \lceil{ \ln(t)^p }\right \rceil } \\
& \leq  \ln(t)^p  t^{-1/ \ln(t)^p } + \frac{1}{\alpha}(1-\alpha)^{  \ln(t)^p } \\
& =  \ln(t)^p  e^{-\ln(t)^{1-p} } + \frac{1}{\alpha}(1-\alpha)^{  \ln(t)^p }.
\end{split}
\end{equation}
Here we may make use of the following bounds, that for $x \geq 0$, $q > 0$,
\begin{equation}
\begin{split}
x^p e^{-x^{1-p}} & \leq \left( \frac{1}{e} \frac{p+q}{1-p} \right)^{ \frac{p+q}{1-p} } x^{-q} \\
(1-\alpha)^{x^p} & \leq \left( \frac{-1}{e \ln(1-\alpha)} \frac{q}{p} \right)^{ \frac{q}{p} } x^{-q} \leq \left( \frac{1}{e \alpha} \frac{q}{p} \right)^{ \frac{q}{p} } x^{-q}.
\end{split}
\end{equation}
Applying these to the above,
\begin{equation}
\sum_{s = 1}^\infty t^{-1/s} (1-\alpha)^s \leq \left( \left( \frac{1}{e} \frac{p+q}{1-p} \right)^{ \frac{p+q}{1-p} } + \frac{1}{\alpha}\left( \frac{1}{e \alpha} \frac{q}{p} \right)^{ \frac{q}{p} } \right) \ln(t)^{-q}.
\end{equation}
Hence, taking $q > 1$,
\begin{equation}
\begin{split}
& \sum_{t = 6}^n \frac{1}{t} \sum_{s = 1}^\infty t^{-1/s} (1-\alpha)^s\\
 & \leq \left( \left( \frac{1}{e} \frac{p+q}{1-p} \right)^{ \frac{p+q}{1-p} } + \frac{1}{\alpha} \left( \frac{1}{e \alpha} \frac{q}{p} \right)^{ \frac{q}{p} } \right) \sum_{t = 3}^n \frac{1}{t} \ln(t)^{-q} \\
& \leq \left( \left( \frac{1}{e} \frac{p+q}{1-p} \right)^{ \frac{p+q}{1-p} } + \frac{1}{\alpha} \left( \frac{1}{e \alpha} \frac{q}{p} \right)^{ \frac{q}{p} } \right) \int_{e}^n \frac{1}{t} \ln(t)^{-q} dt \\
& = \left( \left( \frac{1}{e} \frac{p+q}{1-p} \right)^{ \frac{p+q}{1-p} } + \frac{1}{\alpha} \left( \frac{1}{e \alpha} \frac{q}{p} \right)^{ \frac{q}{p} } \right) \frac{1 - \ln(n)^{1-q}}{q-1} \\
& \leq \left( \left( \frac{1}{e} \frac{p+q}{1-p} \right)^{ \frac{p+q}{1-p} } + \frac{1}{\alpha} \left( \frac{1}{e \alpha} \frac{q}{p} \right)^{ \frac{q}{p} } \right) \frac{1}{q-1}.
\end{split}
\end{equation}
At this point, taking $q = 2p$ and $p = 0.55$ yields
\begin{equation}
\sum_{t = 3}^n \frac{1}{t} \sum_{s = 1}^\infty t^{-1/s} (1-\alpha)^s \leq 29.9628 + \frac{ 5.41341 }{ \alpha^3 },
\end{equation}
which, rounding up, completes the result.
\end{proof}

\begin{proposition}\label{prop:log-bound}
For $Q > 0$, and $0 \leq \epsilon < 1$, the following bound holds:
\begin{equation}
\frac{1}{\ln(1 + Q(1 - \epsilon))} \leq \frac{1}{\ln(1 + Q)} + \frac{\epsilon}{1-\epsilon} \frac{Q}{ (1+Q) \ln(1+Q)^2}.
\end{equation}
\end{proposition}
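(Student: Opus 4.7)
\textbf{Proof plan for Proposition \ref{prop:log-bound}.}
The plan is to reduce the two-parameter inequality to a one-variable monotonicity statement, and then verify the monotonicity by elementary calculus.

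First I would introduce $G(\epsilon) = 1/\ln(1+Q(1-\epsilon))$, so that $G(0) = 1/\ln(1+Q)$ and, by a short computation, $G'(0) = Q/[(1+Q)\ln(1+Q)^2]$. The claim then reads $G(\epsilon) - G(0) \leq \frac{\epsilon}{1-\epsilon} G'(0)$ for $0 \leq \epsilon < 1$. Setting $\Phi(\epsilon) = \frac{\epsilon}{1-\epsilon} G'(0) - (G(\epsilon) - G(0))$, we have $\Phi(0) = 0$, so it suffices to show $\Phi'(\epsilon) \geq 0$ on $[0,1)$. Since $\Phi'(\epsilon) = G'(0)/(1-\epsilon)^2 - G'(\epsilon)$, plugging in $G'(\epsilon) = Q/[(1+Q(1-\epsilon))\ln(1+Q(1-\epsilon))^2]$ and cross-multiplying reduces the target inequality to
$$(1-\epsilon)^2 (1+Q)\ln(1+Q)^2 \leq (1+Q(1-\epsilon))\ln(1+Q(1-\epsilon))^2.$$

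Next I would change variables to $a = 1+Q$ and $b = 1+Q(1-\epsilon)$, so that $1-\epsilon = (b-1)/(a-1)$ and $1 < b \leq a$. The inequality becomes
$$\frac{a(\ln a)^2}{(a-1)^2} \leq \frac{b(\ln b)^2}{(b-1)^2},$$
so it suffices to show that $g(x) := x(\ln x)^2/(x-1)^2$ is non-increasing on $(1,\infty)$. Logarithmic differentiation gives $(\ln g)'(x) = [2(x-1)-(x+1)\ln x]/[x(x-1)\ln x]$, whose denominator is positive for $x>1$, so $g'(x)$ has the sign of $\psi(x) := 2(x-1) - (x+1)\ln x$. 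The problem is thus reduced to checking $\psi(x) \leq 0$ for $x \geq 1$, which follows immediately from $\psi(1) = 0$, $\psi'(1) = 1 - \ln 1 - 1 = 0$, and $\psi''(x) = -(x-1)/x^2 < 0$ on $(1,\infty)$.

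The main obstacle is really just finding the right reformulation: once $\Phi'(\epsilon) \geq 0$ is rewritten via the change of variables $(Q,\epsilon) \mapsto (a,b)$, the parameter $Q$ drops out entirely and the problem collapses to a single-variable monotonicity question, after which the calculus is routine. The only mildly delicate point is keeping track of signs and the ranges $a > b > 1$ during the substitution, to ensure that the chain of equivalences is valid and that ``cross-multiplying'' is permitted.
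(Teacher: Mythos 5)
Your proof is correct and follows essentially the same route as the paper's: both differentiate in $\epsilon$, reduce the claim to $(1-\epsilon)^2(1+Q)\ln(1+Q)^2 \leq (1+Q(1-\epsilon))\ln(1+Q(1-\epsilon))^2$, and then establish this via the monotonicity of $x(\ln x)^2/(x-1)^2$, which in both cases bottoms out at the same elementary inequality $\ln x \geq 2(x-1)/(x+1)$. Your change of variables to $(a,b)$ makes the disappearance of $Q$ explicit and you verify the final log inequality rather than citing it as standard, but these are cosmetic differences only.
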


\begin{proof}[Proof of Proposition \ref{prop:log-bound}]
Let $A(Q, \epsilon)$ denote the RHS of the above, $B(Q, \epsilon)$ denote the left. We adopt the physicists' convention of denoting the partial derivative of $F$ with respect to $x$ as $F_x$.

Note, $A(Q, 0) \leq B(Q,0)$. Hence, it suffices to demonstrate that $A_\epsilon \leq B_\epsilon$ over this range or, since they are both positive,
\begin{equation}
\frac{A_\epsilon}{B_\epsilon} = \frac{(1+Q)(1-\epsilon)^2 \ln(1 + Q)^2}{(1 + Q(1-\epsilon))\ln(1 + Q(1-\epsilon))^2} \leq 1.
\end{equation}
We take, for convenience, $\delta = 1 - \epsilon$, and want to show that for $0 \leq \delta \leq 1$:
\begin{equation}
\frac{(1+Q)\delta^2 \ln(1 + Q)^2}{(1 + Q\delta)\ln(1 + Q\delta)^2} \leq 1.
\end{equation}
The above inequality holds when $\delta = 1$. Taking $C(\delta, Q)$ as the above simplified ratio, it suffices to show that $C_\delta \geq 0$. Simplifying this inequality and canceling the positive factors, it is equivalent to show that $-2 Q \delta + (2 + Q \delta) \ln(1 + Q \delta ) \geq 0$, or taking $x = Q \delta > 0$,
\begin{equation}
\ln(1 + x) \geq \frac{2x}{2 + x}.
\end{equation}
This is a fairly standard and easily verified inequality for $\ln$. This completes the proof.
\end{proof}

\fontsize{9.5pt}{10.5pt} \selectfont 
\bibliographystyle{apsr} 
\bibliography{mab2015}

\end{document}